\newtheorem{assumption}{Assumption}
\newtheorem{corollary}{Corollary}
\newtheorem{lemma}{Lemma}
\newtheorem{theorem}{Theorem}
\DeclareMathOperator{\diag}{diag}
\DeclareMathOperator{\poly}{poly}
\DeclareMathOperator{\rank}{rank}
\DeclareMathOperator{\tr}{tr}
\begin{document}

\begin{CJK*}{UTF8}{}

\title{Provably efficient neural network representation for image classification}

\CJKfamily{gbsn}

\author{Yichen Huang (黄溢辰)\\
California Institute of Technology\\
Pasadena, California 91125, USA\\
ychuang@caltech.edu
}

\maketitle

\end{CJK*}

\begin{abstract}

The state-of-the-art approaches for image classification are based on neural networks. Mathematically, the task of classifying images is equivalent to finding the function that maps an image to the label it is associated with. To rigorously establish the success of neural network methods, we should first prove that the function has an efficient neural network representation, and then design provably efficient training algorithms to find such a representation. Here, we achieve the first goal based on a set of assumptions about the patterns in the images. The validity of these assumptions is very intuitive in many image classification problems, including but not limited to, recognizing handwritten digits.

\end{abstract}

\section{Introduction}

Despite the fact that neural network methods are highly successful in machine learning, a theoretical understanding of the practical success remains to be elucidated. We make rigorous progress towards this goal for image classification. The setting is as follows.

Suppose each image has $n\times n$ pixels, and is black and white. This is without loss of generality, as we would obtain the same result for colorful images, provided that the number of colors is $O(1)$. There are altogether $2^{n^2}$ images, each of which is represented by an element in $\{0,1\}^{\otimes n^2}$. For each label, we define a function $f:\{0,1\}^{\otimes n^2}\to\{0,1\}$ such that $f(x)=1$ if and only if the label matches the image $x$. (The function $f$ should carry the label as a subscript, which is omitted for simplicity.) In practice, $f$ is sparse because most images are featureless and do not have a label. However, there can be an exponential (in $n$) number of images having labels.

In supervised learning, we are given a training set $T\subset\{0,1\}^{\otimes n^2}$ and the value $f(x)$ for each $x$ in $T$. The task is to find $f$. The standard approach is to first presume that $f$ is well approximated by a neural network with $\poly n$ parameters, and then tune the parameters to minimize the difference between $f$ and the function represented by the neural network. To rigorously establish the success of the method, we should first prove that $f$ has an efficient neural network representation, and then design provably $\poly n$-time training algorithms to find a sufficiently good approximation to $f$ \cite{LTR17}.

In this paper, we achieve the first goal (``efficient representation'') based on a set of assumptions about the patterns in the images. The validity of these assumptions is very intuitive in many image classification problems. As an example, it will be justified in the context of recognizing handwritten digits. It should be clear that assumptions are necessary to exclude the possibility that $f^{-1}(1)$ is a random subset of $\{0,1\}^{\otimes n^2}$, in which case an efficient representation of $f$ is impossible.

The second goal (``efficient training'') is more important and challenging. Besides algorithmic advances, one has to define how the training set $T$ is obtained, and then prove that a neural network trained on $T$ can generalize well. We leave it to future work.

\section{Assumptions} \label{s2}

In this section, we introduce two assumptions about the patterns in the images, and justify them in the context of recognizing handwritten digits. Intuitively, it should be clear that both assumptions are valid in many other image classification problems.

We briefly describe some characteristic features of handwritten digits. An image that represents a digit is a black curve on a white background. We assume that the linewidth is $O(1)$ pixels on the whole curve, but different part of the curve may have different linewidth. Of course, a reasonable amount of distortion is allowed. Unlike the MNIST database \cite{LBBH98}, we do not assume that the digits are size-normalized or centered. For example, you may write a small ``$8$'' at the top right corner.

For each image $x\in\{0,1\}^{\otimes n^2}$, let $x=\overline{x_1x_2\cdots x_n}$, where the overline denotes string concatenation, and $x_i\in\{0,1\}^{\otimes n}$ is the configuration of $x$ in the $i$th row. The first assumption states that the total number of row configurations for all images having labels is upper bounded by a polynomial in $n$.

\begin{assumption} \label{a1}
\begin{equation}
|\{x_i:f(x)=1\}|=\poly n,\quad\forall i.
\end{equation}
\end{assumption}

\begin{proof}[Justification of Assumption \ref{a1}]
Consider, for example, the images representing the digit ``$0$.'' There are three cases:

\emph{Case 1}. In a row above or below the digit, the pixels are all white.

\emph{Case 2}. In a row tangent to the curve, there is one contiguous region of black pixels.

\emph{Case 3}. If a row divides ``$0$'' into two parts, then there are two connected components, each of which contains $O(1)$ black pixels.

Taking into account all these cases, the number of row configurations for all images representing ``$0$'' is $O(n^2)$. Assumption \ref{a1} for other digits can be justified in the same way.
\end{proof}

For $i=1,2,\ldots,n$ and $y\in\{0,1\}^{\otimes n}$, let $F_{i,y}$ be a matrix of dimension $2^{(i-1)n}\times 2^{(n-i)n}$. Identifying a bit string with a binary number, the matrix elements of $F_{i,y}$ are defined as
\begin{equation}
F_{i,y}(1+\overline{x_1x_2\cdots x_{i-1}},1+\overline{x_{i+1}x_{i+2}\cdots x_n})=f(\overline{x_1x_2\cdots x_{i-1}yx_{i+1}\cdots x_n}).
\end{equation}
We add ones on the left-hand side in order to keep the row and column indices of a matrix positive. Indeed, $F_{i,y}$ is obtained by reshaping $f(x)$ after fixing $y$ to be the configuration of $x$ in the $i$th row. The second assumption states that the rank (i.e., the number of nonzero singular values) of $F_{i,y}$ is small.

\begin{assumption} \label{a2}
\begin{equation}
\rank F_{i,y}=O(1).
\end{equation}
\end{assumption}

This assumption can be relaxed to $\rank F_{i,y}=\poly n$ without affecting any result in this paper.

\begin{proof}[Justification of Assumption \ref{a2}]
Consider again the images representing ``$0$.'' If the row configuration $y$ does not belong to any of the three cases listed above, then $F_{i,y}$ is a zero matrix and has rank $0$.

If $y$ is in \emph{Case 1}, the digit is either above or below the $i$th row. Let $g_1$ be a row vector of length $(i-1)n$. The entries are given by $g_1(1+\overline{x_1x_2\cdots x_{i-1}})=1$, if the string $\overline{x_1x_2\cdots x_{i-1}}$, interpreted as an image with $(i-1)\times n$ pixels, represents ``$0$,'' and $g_1(1+\overline{x_1x_2\cdots x_{i-1}})=0$ otherwise. Similarly, define $g_2$ as a row vector of length $(n-i)n$ such that $g_2$ indicates whether an image with $(n-i)\times n$ pixels represents ``$0$.'' We claim
\begin{equation}
F_{i,y}=g_1^T(1\quad\underbrace{0\quad0\quad\cdots\quad0}_{2^{(n-i)n}-1~{\rm zeros} })+(1\quad\underbrace{0\quad0\quad\cdots\quad0}_{2^{(i-1)n}-1~{\rm zeros}})^Tg_2.
\end{equation}
Indeed, the first term on the right-hand side indicates that the digit ``$0$'' is above the $i$th row, where the row vector $(1~0~0~\cdots~0)$ fixes all pixels below the $i$th row as white. Similarly, the second term indicates that the digit is below the $i$th row. Thus, $\rank F_{i,y}=2$.

If $y$ is in \emph{Case 2}, a similar argument implies $\rank F_{i,y}=2$.

If $y$ is in \emph{Case 3}, the $i$th row divides the digit into two parts. Let $h_1$ be a row vector of length $2^{(i-1)n}$. The entries are given by $h_1(1+\overline{x_1x_2\cdots x_{i-1}})=1$, if the string $\overline{x_1x_2\cdots x_{i-1}}$, interpreted as an image with $(i-1)\times n$ pixels, represents the upper part of ``$0$,'' i.e., an arc-like curve connecting the two connected components of black pixels in $y$ from above. Otherwise, $h_1(1+\overline{x_1x_2\cdots x_{i-1}})=0$. Similarly, define $h_2$ as a row vector of length $2^{(n-i)n}$ such that $h_2$ indicates whether an image with $(n-i)\times n$ pixels represents the lower part of ``$0$.'' We claim
\begin{equation}
F_{i,y}=h_1^Th_2.
\end{equation}
Indeed, any two (sub)images representing the upper and lower parts of ``$0$'' can be pieced together to give a full ``$0$,'' provided that they match at the boundary as specified by $y$. Thus, $\rank F_{i,y}=1$.

Taking into account all the cases, $\rank F_{i,y}\le2$ if $f$ is the indicator function of ``$0$.'' Assumption \ref{a2} for other digits can be justified in the same way. As an additional example, $\rank F_{i,y}=2$ if $f$ is the indicator function of ``$8$'' and if there are two connected components of black pixels in $y$.
\end{proof}

For $i=1,2,\ldots,n-1$, let $F_i$ be a matrix of dimension $2^{in}\times 2^{(n-i)n}$ obtained by reshaping $f(x)$. In particular, the matrix elements of $F_i$ are given by
\begin{equation}
F_i(1+\overline{x_1x_2\cdots x_i},1+\overline{x_{i+1}x_{i+2}\cdots x_n})=f(\overline{x_1x_2\cdots x_{i-1}x_ix_{i+1}\cdots x_n}).
\end{equation}

\begin{lemma} \label{l1}
Assumptions \ref{a1}, \ref{a2} imply
\begin{equation}
\rank F_i=\poly n.
\end{equation}
\end{lemma}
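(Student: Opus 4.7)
The plan is to decompose $F_i$ by row-blocks indexed by the value of $x_i$, so that each block is (up to reindexing) one of the matrices $F_{i,y}$ from Assumption~\ref{a2}. Concretely, the row index $1+\overline{x_1 x_2 \cdots x_i}$ of $F_i$ splits canonically into a pair $(y, \overline{x_1 \cdots x_{i-1}})$ with $y = x_i \in \{0,1\}^{\otimes n}$. Permuting rows to group them by the value of $y$, I would write $F_i$ (up to this row permutation) as the vertical stack of the $2^n$ blocks $F_{i,y}$, one for each $y \in \{0,1\}^{\otimes n}$.

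From this stacked form, rank is subadditive: $\rank F_i \le \sum_{y \in \{0,1\}^{\otimes n}} \rank F_{i,y}$. Naively this gives a bound of $2^n \cdot O(1)$, which is useless. The key point I would then invoke is Assumption~\ref{a1}: if $y$ never appears as the $i$th row configuration of any labeled image, then every entry of $F_{i,y}$ equals some $f(\cdot)$ with that $i$th row fixed to $y$, and all such values are $0$, so $F_{i,y}$ is the zero matrix and contributes nothing to the sum. Hence only $y \in \{x_i : f(x)=1\}$ can contribute, and by Assumption~\ref{a1} there are only $\poly n$ such $y$.

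Combining, $\rank F_i \le \poly n \cdot O(1) = \poly n$, which is exactly the claim. I do not foresee a serious obstacle: the only substantive step is recognizing the block decomposition of $F_i$ into the $F_{i,y}$ and applying subadditivity of rank together with the sparsity of nonzero blocks guaranteed by Assumption~\ref{a1}; the rest is index bookkeeping to verify that reshaping $F_i$ with the first $n(i-1)+n$ row bits split as $(\overline{x_1\cdots x_{i-1}}, y)$ really does reproduce the definition of $F_{i,y}$.
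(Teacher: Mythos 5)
Your proposal is correct and matches the paper's own argument: the paper likewise observes that each row of $F_i$ is a row of some $F_{i,x_i}$, bounds $\rank F_i$ by the sum of $\rank F_{i,y}$ over all $y$, and then uses Assumption~\ref{a1} to restrict the sum to the polynomially many valid row configurations (the remaining $F_{i,y}$ being zero) and Assumption~\ref{a2} to bound each term by $O(1)$. The only cosmetic difference is that you phrase the decomposition as an explicit row permutation into stacked blocks, while the paper simply identifies rows of $F_i$ with rows of the $F_{i,y}$; the substance is identical.
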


\begin{proof}
The $(1+\overline{x_1x_2\cdots x_i})$th row of $F_i$ is identical to the $(1+\overline{x_1x_2\cdots x_{i-1}})$th row of $F_{i,x_i}$. Hence,
\begin{equation}
\rank F_i\le\sum_{y\in\{0,1\}^{\otimes n}}\rank F_{i,y}=\sum_{y\in\{x_i:f(x)=1\}}\rank F_{i,y}.
\end{equation}
The last summation has only a polynomial number of terms (Assumption \ref{a1}), each of which is $O(1)$ (Assumption \ref{a2}).
\end{proof}

\section{Low-rankness}

In this section, we discuss the implications of Assumptions \ref{a1}, \ref{a2} at a conceptual rather than technical level. To put it in a broader context, we first slightly generalize Lemma \ref{l1}.

Let $A$ be a rectangular region of the image, and $|A|$ be the number of pixels in $A$. Let $\bar A$ be the complement of $A$ so that $|\bar A|=n^2-|A|$. Let $|\partial A|$ be the perimeter (length of the boundary $\partial A$) of $A$. Let $x_A\in\{0,1\}^{\otimes|A|}$ be the configuration of an image $x$ in $A$. We reshape the indicator function $f(x)$ into a matrix $F_A$ of dimension $2^{|A|}\times2^{|\bar A|}$ such that the row (column) index corresponds to $x_A$ ($x_{\bar A}$). In particular, the matrix elements of $F_A$ are given by
\begin{equation}
F_A(1+x_A,1+x_{\bar A})=f(x).
\end{equation}

\begin{corollary} \label{c1}
Slight generalizations of Assumptions \ref{a1}, \ref{a2} imply
\begin{equation} \label{eqc1}
\rank F_A=\poly|\partial A|.
\end{equation}
\end{corollary}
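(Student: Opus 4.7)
The plan is to mirror the proof of Lemma~\ref{l1}, with the single row $i$ replaced by the boundary $\partial A$ as the ``separator'' between the two sides of the partition. In Lemma~\ref{l1} the $i$th row is a $1$D curve separating $\{1,\ldots,i\}\times\{1,\ldots,n\}$ from its complement; for a general rectangular region $A$ the natural separator is $\partial A$, and the rank bound $\poly n$ correspondingly becomes $\poly|\partial A|$.

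First, I would state the slight generalizations of the two assumptions. Generalized Assumption~\ref{a1} should read $|\{x_{\partial A}:f(x)=1\}|=\poly|\partial A|$ for every rectangular region $A$. For the generalization of Assumption~\ref{a2}, I would define, for each $y\in\{0,1\}^{\otimes|\partial A|}$, a matrix $F_{A,y}$ of dimension $2^{|A|-|\partial A|}\times 2^{|\bar A|}$ whose entry at $(x_{A\setminus\partial A},x_{\bar A})$ is $f(x)$ with $x_{\partial A}=y$; the generalized assumption then asserts $\rank F_{A,y}=O(1)$. Both generalizations admit the same intuitive justification as the originals: in the handwritten-digit context the curve intersects $\partial A$ in $O(1)$ places, so the number of admissible boundary configurations is $\poly|\partial A|$, and fixing such a configuration decouples ``the part of the digit inside $A$'' from ``the part outside $A$'' into a constant-rank matrix by a case analysis analogous to Cases~1--3 in the justification of Assumption~\ref{a2}.

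Second, I would invoke the same row-inclusion argument as in Lemma~\ref{l1}. Each row of $F_A$ indexed by $x_A$ coincides with a row of $F_{A,x_{\partial A}}$ indexed by $x_{A\setminus\partial A}$ (the only difference between the two matrices is whether the pixels on $\partial A$ appear as part of the row index or as a fixed parameter), so
\begin{equation}
\rank F_A \le \sum_{y\in\{0,1\}^{\otimes|\partial A|}} \rank F_{A,y} = \sum_{y\in\{x_{\partial A}:f(x)=1\}} \rank F_{A,y},
\end{equation}
the second equality dropping the $y$'s that contribute a zero matrix. By generalized Assumption~\ref{a1} the remaining sum has $\poly|\partial A|$ terms, each of which is $O(1)$ by generalized Assumption~\ref{a2}, giving \eqref{eqc1}.

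The main obstacle is conceptual rather than computational: one must fix a clean definition of $\partial A$ (naturally, the layer of pixels of $A$ adjacent to $\bar A$) and verify that the constant-rank claim in generalized Assumption~\ref{a2} really does follow from the intuition that fixing the pixels on $\partial A$ tells the interior and exterior ``how to glue,'' even when $\partial A$ traces a bent curve around a rectangle rather than a straight row. Once this convention is settled and the interior/exterior decoupling is verified case-by-case as in the justification of Assumption~\ref{a2}, the remainder of the argument is a direct transcription of Lemma~\ref{l1}.
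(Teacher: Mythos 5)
Your proposal is correct and takes essentially the same route as the paper, which likewise generalizes Assumption~\ref{a1} to ``the number of configurations on the boundary is $\poly|\partial A|$'' and then repeats the row-inclusion argument of Lemma~\ref{l1}. In fact, your write-up (explicitly defining $F_{A,y}$ and spelling out the summation over valid boundary configurations) is more detailed than the paper's own proof, which is only a brief sketch deferring to Lemma~\ref{l1}.
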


\begin{proof}
This corollary reduces to Lemma \ref{l1} in the special case where $A$ consists of the first $i$ rows of the image. It can be proved in the same way as Lemma \ref{l1} by slightly generalizing Assumptions \ref{a1}, \ref{a2}. For instance, Assumption \ref{a1} should be modified to ``the number of configurations on the boundary is $\poly|\partial A|$.'' It should be clear that the validity of such generalizations of Assumptions \ref{a1}, \ref{a2} is still very intuitive in many image classification problems.
\end{proof}

If $f(x)$ were a function such that $f^{-1}(1)$ is a random subset of $\{0,1\}^{\otimes n^2}$, then for $|A|\le n^2/2$,
\begin{equation}
\rank F_A=e^{\Theta(|A|)}
\end{equation}
with overwhelming probability. Thus, Eq. (\ref{eqc1}) is a mathematical characterization of the fact that the indicator function $f(x)$ is highly structured.

The low-rank condition (\ref{eqc1}) provides an analytical understanding of ``local feature,'' an intuitive concept explaining the working principle of neural networks. A local feature is a particular pattern shared by many different (sub)images on a patch, and local features can be pieced together to form objects. The matrix $F_A$ can be decomposed as
\begin{equation}
F_A=\sum_{i=1}^{\poly|\partial A|}g_{A,i}^Tg_{\bar A,i},
\end{equation}
where each $g_{A,i}(g_{\bar A,i})$ is a (provably sparse) row vector of length $2^{|A|}(2^{|\bar A|})$. One might interpret $g_{A,i}$ as the indicator function of the $i$th local feature in $A$. (Here we neglect the technical subtlety that $g_{A,i}$ may not be a $0$-$1$ vector.) The product $g_{A,i}^Tg_{\bar A,i}$ specifies a rule for combining a local feature in $A$ with one in $\bar A$. The number of such combinations is determined by the number of configurations on the boundary $\partial A$, for the reason that the configurations in $A$ and $\bar A$ must match on $\partial A$.

\section{Neural network representation} \label{s4}

In this section, we prove that the low-rank condition established in the preceding section implies an efficient representation of the indicator function $f(x)$ using a particular type of (deep) convolutional arithmetic circuits (ConvAC) \cite{CSS16}. ConvAC is a variant of the more common convolutional neural networks (ConvNet) with rectified linear units (ReLU). The former uses a different set of activation and pooling functions, but is otherwise similar to the latter. Ref. \cite{CS16} provides a detailed discussion of the relationship between ConvAC and ConvNet. A nontechnical introduction to ConvAC is \cite{CSL+17}.

We now describe the neural network in detail. Assume without loss of generality that $\log_2n$ is an integer. Consider a complete binary tree with $2\log_2n+1$ layers, where the $i$th layer (from leaf to root) has $n^2/2^{i-1}$ nodes. Each node is indexed by three integers $(i,j,k)$. The first is the layer index, and the other two specify the spatial location (to be defined more precisely later) of the node. Let $[\cdots]$ denote the floor function. In the $i$th layer, the indices $j$ and $k$ run from $1$ to $n/2^{[i/2]}$ and $n/2^{[(i-1)/2]}$, respectively. The two children of the node $(i,j,k)$ are $(i-1,2j-1,k),(i-1,2j,k)$ for even $i$ and $(i-1,j,2k-1),(i-1,j,2k)$ for odd $i\ge3$.

Each node $(i,j,k)$ carries a function. The function outputs a row vector, whose length, denoted by $l_i$, is called the number of channels in the $i$th layer. We fix $l_{2\log_2n+1}=1$ so that the output of the root node is a number. For a leaf node $(1,j,k)$, the output is the (basis) vector $(1~0)$ or $(0~1)$ if the pixel in the $j$th row and $k$th column is black or white, respectively. The function carried by a non-leaf node $(i,j,k)$ is a composition of convolution, trivial activation, and product pooling. (In comparison, ConvNet uses ReLU activation and max/average pooling.) Taking the outputs $u,v$ of the two children as input, the $m$th entry in the output of the function is given by 
\begin{equation} \label{vec}
\xrightarrow{\rm input}u,v\xrightarrow{\rm 2\to1~pooling}u\odot v\xrightarrow{\rm 1\times1~convolution}V_{i,j,k,m}(u\odot v)^T\xrightarrow{\rm trivial~activation}V_{i,j,k,m}(u\odot v)^T.
\end{equation}
Here, $\odot$ denotes element-wise multiplication, and $V_{i,j,k,m}$, whose entries are the tuning parameters of the network, is a row vector of length $l_{i-1}$.

The network has a clear causal structure. Identifying a leaf node $(1,j,k)$ with the pixel at the position $(j,k)$, the support $S_{i,j,k}$ of a node $(i,j,k)$ is defined as the region of all pixels that are descendants of the node. This definition implies that (i) the output of a node depends only on the pixels in its support; (ii) the supports of the nodes in the same layer are pairwise disjoint; (iii) the support of a node can be obtained by merging the supports of its two children; (iv) the support of every node in an odd layer is a square and that in an even layer is a rectangle with aspect ratio $2$. These properties suggest interpreting the working principle of the network as follows. Let $p(i,j,k)$ and $s(i,j,k)$ be the parent and sibling of the node $(i,j,k)$, respectively. The local features of the region $S_{i,j,k}$ are extracted by the subtree rooted at $(i,j,k)$, and are encoded in the output of $(i,j,k)$. This is because the output of any other node in the $i$th layer does not depend on the pixels in $S_{i,j,k}$ at all. The function carried by $p(i,j,k)$ combines the local features of the regions $S_{i,j,k}$ and $S_{s(i,j,k)}$ in a particular way to obtain the local features of $S_{p(i,j,k)}=S_{(i,j,k)}\cup S_{s(i,j,k)}$ at a larger scale.

\begin{theorem} \label{t1}
Corollary \ref{c1} implies an exact representation of $f(x)$ using ConvAC as described above, where the number of channels in the $i$th layer is upper bounded by
\begin{equation}
l_i=2^{O(i)}.
\end{equation}
\end{theorem}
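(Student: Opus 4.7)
The plan is to express $f(x)$ as a hierarchical tensor decomposition aligned with the binary tree of the ConvAC, and then convert the bilinear transfer operation at each internal node into the diagonal element-wise-product form demanded by ConvAC via a channel inflation trick. The rank bounds of Corollary \ref{c1} will directly drive the channel counts.

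First, at every node $(i,j,k)$ the support $S_{i,j,k}$ is a square or $2{:}1$ rectangle of area $2^{i-1}$ and perimeter $O(2^{i/2})$, so Corollary \ref{c1} gives $r_{i,j,k}:=\rank F_{S_{i,j,k}}\le\poly|\partial S_{i,j,k}|=2^{O(i)}$. Next, I construct by induction on layers, at each node with support $A$, a row-vector-valued map $\phi^A(x_A)$ whose entries form a basis of the column space $V_A$ of $F_A$; for a leaf, the prescribed output $(1,0)$ or $(0,1)$ already supplies such a basis of the $2$-dimensional function space on a single pixel. The induction step for an internal node with support $C=A\cup B$ (children supports $A,B$) rests on the following claim: every $\psi\in V_C$ admits an expansion
\begin{equation}
\psi(x_A,x_B)=\sum_{s,t}W^{\psi}_{s,t}\,\phi^A_s(x_A)\,\phi^B_t(x_B)
\end{equation}
in the children's bases. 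Indeed, writing $\psi(x_C)=\sum_{x_{\bar C}}F_C(x_C,x_{\bar C})\alpha(x_{\bar C})$ and reshaping into the matrix $M(x_A,x_B):=\psi(x_A,x_B)$, each column of $M$ (fixed $x_B$) equals $\sum_{x_{\bar A}}F_A(x_A,x_{\bar A})\beta_{x_B}(x_{\bar A})\in V_A$ (using $\bar A=B\cup\bar C$), and symmetrically each row lies in $V_B$; a rank decomposition of $M$ followed by expansion in the chosen bases yields the displayed expansion.

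It remains to realize this general bilinear form via the ConvAC convolution $V_m(u\odot v)^T=\sum_s V_{m,s}u_sv_s$, which can only compute diagonal products. I will inflate channels as follows: for each node $v$ at layer $i$, index the emitted output by pairs $(s,t)\in[r_v]\times[r_{s(v)}]$, and set $\tilde\phi^v_{(s,t)}:=\phi^v_s$ (constant in $t$); its sibling $w=s(v)$ emits $\tilde\phi^w_{(s,t)}:=\phi^w_t$ (constant in $s$). The element-wise product at their parent then enumerates every pair $\phi^v_s\phi^w_t$ exactly once, and the $1\times 1$ convolution with weight $V_{p,m,(s,t)}=W^m_{s,t}$ reproduces each basis element $\phi^C_m$. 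Since $l_i$ is a single number shared by all nodes in the layer, I set $l_i=\max_{v\text{ in layer }i}r_v\cdot r_{s(v)}=2^{O(i)}$ and zero-pad unused coordinates. At the root, $V_{\text{whole image}}$ is one-dimensional and spanned by $f$ itself, so with $l=1$ the sole output reproduces $f(x)$.

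The main obstacle I foresee is the careful handling of the inflation so it remains consistent across all nodes of a single layer, since different sibling pairs have different intrinsic rank products while $l_i$ must be one fixed integer. This is absorbed into the uniform maximum-padding above, together with the observation that zero coordinates in $\tilde\phi^v$ vanish under element-wise product regardless of the sibling's value, so different sibling pairs' padded regions do not interfere. Secondary routine checks include verifying the perimeter bound $|\partial S_{i,j,k}|=O(2^{i/2})$ for both square and $2{:}1$ shapes alternating across layers, and verifying that the trivial activation in the network definition does not obstruct the linear algebra.
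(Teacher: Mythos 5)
Your proposal is correct and takes essentially the same route as the paper: bound $\rank F_{S_{i,j,k}}\le\poly|\partial S_{i,j,k}|=2^{O(i)}$ via Corollary \ref{c1}, realize $f$ exactly in hierarchical Tucker format with these ranks as channel counts, and then reduce the general bilinear node operation to the diagonal ConvAC convolution by duplicating channels --- your pair-indexing inflation $\tilde\phi^v_{(s,t)}=\phi^v_s$, $\tilde\phi^w_{(s,t)}=\phi^w_t$ is precisely the paper's $(u_1~u_2~u_1~u_2)$, $(v_1~v_1~v_2~v_2)$ copying trick, at the same quadratic cost. The only substantive difference is that you prove the hierarchical Tucker representability from scratch (via the nestedness argument showing every element of the parent's column space expands bilinearly in the children's column-space bases), whereas the paper imports this as a known lemma from the tensor-network literature \cite{SDV06, Gra10}.
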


This theorem (indirectly) supports the conventional wisdom that the number of channels should grow with the layer index in ConvNet.

\begin{proof} [Proof of Theorem \ref{t1}]
We first slightly generalize the architecture by allowing different channels to mix in pooling. In particular, we modify Eq. (\ref{vec}) to
\begin{equation} \label{mat}
\xrightarrow{\rm input}u,v\xrightarrow{\rm 2\to1~pooling}u^Tv\xrightarrow{\rm 1\times1~convolution}\tr(M_{i,j,k,m}u^Tv)\xrightarrow{\rm trivial~activation}vM_{i,j,k,m}u^T,
\end{equation}
where $M_{i,j,k,m}$ is a matrix of dimension $l_{i-1}\times l_{i-1}$. It should be clear that Eq. (\ref{vec}) corresponds to the special case that $M_{i,j,k,m}=\diag V_{i,j,k,m}$ is diagonal. Such a generalized ConvAC is also known as hierarchical Tucker format \cite{HK09, OT09} in applied mathematics and tree tensor network \cite{SDV06} in physics. Thus, we may import existing results in the literature to upper bound the number of channels.
\begin{lemma} [\cite{SDV06, Gra10}]
The function $f(x)$ allows an exact representation in hierarchical Tucker format, provided that
\begin{equation}
l_i\ge\rank F_{S_{i,j,k}},\quad\forall j,k.
\end{equation}
\end{lemma}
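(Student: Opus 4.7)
\noindent\emph{Proof proposal.} The plan is to prove the lemma by structural induction on the binary tree, with the inductive invariant that for each node $(i,j,k)$ one can choose the convolution matrices at all descendants so that the $l_i$ components of that node's output, viewed as functions $\{0,1\}^{|S_{i,j,k}|}\to\mathbb{R}$, span the column space $\mathrm{Col}(F_{S_{i,j,k}})\subseteq\mathbb{R}^{2^{|S_{i,j,k}|}}$ (the columns of $F_{S_{i,j,k}}$ are indexed by $x_{\overline{S_{i,j,k}}}$, so each column is the function $x_{S_{i,j,k}}\mapsto f(x)$ for a fixed assignment outside the support). The base case is trivial: at a leaf the support is a single pixel, $\rank F_{S_{1,j,k}}\le 2$, and the canonical outputs $(1,0),(0,1)$ span all of $\mathbb{R}^2$, which certainly contains $\mathrm{Col}(F_{S_{1,j,k}})$.

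For the inductive step at an internal node $(i,j,k)$ with children outputs $u,v\in\mathbb{R}^{l_{i-1}}$ spanning $\mathrm{Col}(F_{S_1})$ and $\mathrm{Col}(F_{S_2})$ respectively (where $S_1,S_2$ partition $S_{i,j,k}$), the $m$th component of the node's output is $vM_{i,j,k,m}u^T=\sum_{a,b}M_{i,j,k,m}[a,b]\,v_a u_b$. As $M_{i,j,k,m}$ ranges over all $l_{i-1}\times l_{i-1}$ matrices, the achievable outputs are exactly all linear combinations of the products $\{v_a u_b\}_{a,b}$, and these products span the tensor-product subspace $\mathrm{Col}(F_{S_1})\otimes\mathrm{Col}(F_{S_2})\subseteq\mathbb{R}^{2^{|S_{i,j,k}|}}$. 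The crux of the argument is the containment
\begin{equation*}
\mathrm{Col}(F_{S_{i,j,k}})\subseteq\mathrm{Col}(F_{S_1})\otimes\mathrm{Col}(F_{S_2}),
\end{equation*}
which I would establish by taking any column of $F_{S_{i,j,k}}$, reshaping it as a $2^{|S_1|}\times 2^{|S_2|}$ matrix, and observing that each of its columns is literally a column of $F_{S_1}$ (the fixed $x_{\overline{S_{i,j,k}}}$ together with the fixed $x_{S_2}$ specify an index into $\overline{S_1}$) while each of its rows is a column of $F_{S_2}$; since a matrix whose columns lie in $V_1$ and whose rows lie in $V_2$ must itself lie in $V_1\otimes V_2$, the containment follows. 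Given this and the hypothesis $l_i\ge\rank F_{S_{i,j,k}}=\dim\mathrm{Col}(F_{S_{i,j,k}})$, one can select $M_{i,j,k,1},\ldots,M_{i,j,k,l_i}$ so that their bilinear outputs span $\mathrm{Col}(F_{S_{i,j,k}})$, closing the induction.

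At the root, the support is the whole image and $F_{\mathrm{root}}$ is a column vector of $f$-values with one-dimensional column space, so the prescribed $l_{2\log_2 n+1}=1$ suffices and the unique root output can be normalized to equal $f(x)$ itself, yielding the claimed exact representation. The main obstacle is the tensor-product containment step above --- establishing that any contraction of $f$ against a function on $\overline{S_{i,j,k}}$ factors through $\mathrm{Col}(F_{S_1})$ on one leg and $\mathrm{Col}(F_{S_2})$ on the other. This is a standard fact in the hierarchical Tucker / tree tensor network literature (essentially the statement that partial contractions cannot raise the matricization rank across a finer cut), and once it is granted the rest of the proof is routine causal bookkeeping up the binary tree together with a dimension count at each node.
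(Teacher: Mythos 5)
Your proposal is correct, but there is an important mismatch in kind: the paper does not prove this lemma at all. It is imported as a black box from the tree tensor network / hierarchical Tucker literature \cite{SDV06, Gra10}, so there is no in-paper argument to compare against; what you have written is a self-contained reconstruction of the standard proof from those references. Your three ingredients are all sound: the induction up the tree with the invariant that the components of a node's output, as functions of $x_{S_{i,j,k}}$, span a subspace containing $\mathrm{Col}(F_{S_{i,j,k}})$; the observation that as $M_{i,j,k,m}$ ranges over all matrices the bilinear forms $vM_{i,j,k,m}u^T$ realize exactly the span of the products $v_au_b$, i.e.\ $\mathrm{Col}(F_{S_1})\otimes\mathrm{Col}(F_{S_2})$; and the nestedness property $\mathrm{Col}(F_{S_{i,j,k}})\subseteq\mathrm{Col}(F_{S_1})\otimes\mathrm{Col}(F_{S_2})$, which you establish correctly via the reshape argument (a matrix $G$ whose columns lie in $V_1$ and whose rows lie in $V_2$ satisfies $G=P_1GP_2$ for the corresponding projections and hence lies in $V_1\otimes V_2$). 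Two points of hygiene, neither affecting correctness: state the invariant as containment rather than exact spanning, since at the leaves the two component functions span all of $\mathbb{R}^2$, which may strictly contain $\mathrm{Col}(F_{S_{1,j,k}})$ (your base case already acknowledges this, but your stated invariant does not); and the root normalization should dispose of the trivial case $f\equiv0$ separately. What your version buys over the paper's citation is that it makes visible exactly where the hypothesis $l_i\ge\rank F_{S_{i,j,k}}$ is used --- the dimension count when selecting $M_{i,j,k,1},\ldots,M_{i,j,k,l_i}$ to realize a basis of $\mathrm{Col}(F_{S_{i,j,k}})$ --- at essentially no cost in length.
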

Corollary \ref{c1} implies
\begin{equation}
\rank F_{S_{i,j,k}}=\poly|\partial S_{i,j,k}|=2^{O(i)}.
\end{equation}
We complete the proof of Theorem \ref{t1} by showing that the matrix $M_{i,j,k,m}$ in Eq. (\ref{mat}) can be brought into diagonal form at the price of squaring the number of channels in every layer. In the simplest case of two channels, suppose
\begin{equation}
u=(u_1~u_2),\quad v=(v_1~v_2),\quad M_{i,j,k,m}=\begin{pmatrix}a&b\\c&d\end{pmatrix}.
\end{equation}
We copy each entry in $u$ and $v$ once in an appropriate order so that the outputs of the two children become $(u_1~u_2~u_1~u_2)$ and $(v_1~v_1~v_2~v_2)$, respectively. Then,
\begin{equation}
vM_{i,j,k,m}u^T=(v_1~v_1~v_2~v_2)\diag(a,b,c,d)(u_1~u_2~u_1~u_2)^T.
\end{equation}
It should be clear that the general case of more than two channels can be handled similarly.
\end{proof}

The architecture considered in this section has the property that the supports of the nodes in the same layer are pairwise disjoint. This property is desirable for interpreting the functionality of each node, and a network with the non-overlapping property (and a polynomial number of channels) is sufficiently powerful to represent the function $f(x)$. However, we believe that it may be practically beneficial (in the sense of reducing the number of channels) to have overlapping supports for nodes in the same layer, as in ``standard'' ConvNet.

Another approach that may improve the practical performance of hierarchical Tucker format is to use the multiscale entanglement renormalization ansatz \cite{Vid07}.

\section{Tensor train format}

In this section, we prove that Assumptions \ref{a1}, \ref{a2} imply an efficient representation of the indicator function $f(x)$ in the tensor train format \cite{Ose11}, which is also known as matrix product representation \cite{PVWC07} in physics.

We fix an ordering of the pixels by indexing the pixel in the $i$th row and $j$th column with the integer $k:=(i-1)n+j$. We associate each pixel with two matrices $M_{k,0},M_{k,1}$ of dimension $l_{k-1}\times l_k$. The tensor train format encodes a function, which maps an image $y=\overline{y_1y_2\cdots y_{n^2}}\in\{0,1\}^{\otimes n^2}$ (note that $y_k\in\{0,1\}$ represents the $k$th pixel) to
\begin{equation}
M_{1,y_1}M_{2,y_2}\cdots M_{n^2,y_{n^2}},
\end{equation}
where we fix $l_0=l_{n^2}=1$ so that the output is a number.

\begin{lemma} [\cite{Vid03, Ose11}]
The function $f(x)$ allows an exact representation in the tensor train format, provided that
\begin{equation}
l_k\ge\rank F_{B_k},
\end{equation}
where $B_k$ is the region of pixels with indices from $1$ to $k$.
\end{lemma}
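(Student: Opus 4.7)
The plan is to construct the matrices $M_{k,y_k}$ inductively by successive low-rank factorizations of reshaped tensors---the standard SVD-based construction from the tensor train / matrix product state literature. The sketch below is essentially a verification that the bond dimension at each cut is indeed controlled by $\rank F_{B_k}$.

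First, reshape $f$ as the matrix $F_{B_1}$ of dimension $2\times 2^{n^2-1}$. Since $\rank F_{B_1}\le l_1$, I factor $F_{B_1}=A_1 R_1$ with $A_1$ of dimension $2\times l_1$ and full column rank, and $R_1$ of dimension $l_1\times 2^{n^2-1}$; then I define $M_{1,y_1}$ to be the $(1+y_1)$-th row of $A_1$. Inductively, after step $k-1$ suppose I have produced $M_{1,y_1},\ldots,M_{k-1,y_{k-1}}$ and a residual matrix $R_{k-1}$ of dimension $l_{k-1}\times 2^{n^2-k+1}$ with the property that
\begin{equation*}
f(y)=M_{1,y_1}M_{2,y_2}\cdots M_{k-1,y_{k-1}}R_{k-1}\bigl(\,\cdot\,,\,1+\overline{y_k y_{k+1}\cdots y_{n^2}}\bigr),
\end{equation*}
together with the invariant that the linear map $L_{k-1}$ sending $\overline{y_1\cdots y_{k-1}}$ to $M_{1,y_1}\cdots M_{k-1,y_{k-1}}$, viewed as a $2^{k-1}\times l_{k-1}$ matrix, has full column rank. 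I then reshape $R_{k-1}$ into a matrix $\tilde R_{k-1}$ of dimension $(2l_{k-1})\times 2^{n^2-k}$ by absorbing $y_k$ into the row index.

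The crux is the rank bound $\rank\tilde R_{k-1}\le \rank F_{B_k}$. Directly from the assembly above, $F_{B_k}=(L_{k-1}\otimes I_2)\tilde R_{k-1}$; since $L_{k-1}\otimes I_2$ has full column rank, left multiplication preserves rank, so $\rank\tilde R_{k-1}=\rank F_{B_k}\le l_k$. I can therefore factor $\tilde R_{k-1}=A_k R_k$ with $A_k$ of dimension $(2l_{k-1})\times l_k$ of full column rank and $R_k$ of dimension $l_k\times 2^{n^2-k}$, and set $M_{k,y_k}(\alpha_{k-1},\alpha_k):=A_k(1+\overline{\alpha_{k-1}y_k},\alpha_k)$; this preserves the invariant. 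Iterating until $k=n^2$, where $l_{n^2}=1$ forces $R_{n^2}$ to be a scalar that can be absorbed into $M_{n^2,y_{n^2}}$, produces the desired tensor train representation. The only mildly delicate point, which I expect to be the main obstacle, is maintaining the full-column-rank invariant on the left factors so that the rank-preservation identity $\rank\tilde R_{k-1}=\rank F_{B_k}$ holds; everything else is bookkeeping about how the index reshapings compose with the inductive factorizations.
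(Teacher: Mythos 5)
The paper does not actually prove this lemma---it imports it verbatim from \cite{Vid03, Ose11}---and your sequential rank-factorization construction is exactly the standard argument behind those references, so your proposal is correct and in fact supplies the proof the paper omits. The only cosmetic point is that you should take the bond dimension at cut $k$ to be $r_k:=\rank F_{B_k}\le l_k$ rather than exactly $l_k$ (padding with zeros afterwards if desired), since otherwise an $A_k$ with $l_k$ columns and full column rank need not exist; with that convention the full-column-rank invariant you flag as the delicate point is automatic from rank factorization, and your identity $\rank\tilde R_{k-1}=\rank F_{B_k}$ goes through exactly as stated.
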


\begin{theorem} \label{t2}
Assumptions \ref{a1}, \ref{a2} imply an exact representation of $f(x)$ using the tensor train format, where the matrix dimension is upper bounded by
\begin{equation}
\max_kl_k=\poly n.
\end{equation}
\end{theorem}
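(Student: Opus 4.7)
The plan is to combine the preceding lemma from \cite{Vid03, Ose11} with a rank computation that mimics Lemma \ref{l1}. That lemma reduces Theorem \ref{t2} to showing $\rank F_{B_k} = \poly n$ for every $k = 1, 2, \ldots, n^2 - 1$. Writing $k = (i-1)n + j$ with $1 \le i \le n$ and $0 \le j < n$, the region $B_k$ is the first $i-1$ complete rows together with the first $j$ pixels of row $i$. This is precisely the ``L-shaped prefix'' case of Corollary \ref{c1}, whose internal boundary has length $O(n)$, so the bound is already implicit there; nevertheless I would give a self-contained derivation that stays closer to Lemma \ref{l1}.

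First I would parameterize the row index of $F_{B_k}$ as $(x_1, \ldots, x_{i-1}, y_{\mathrm{L}})$, where $y_{\mathrm{L}}$ denotes the length-$j$ prefix of row $i$, and the column index as $(y_{\mathrm{R}}, x_{i+1}, \ldots, x_n)$, where $y_{\mathrm{R}}$ denotes the length-$(n-j)$ suffix of row $i$. Grouping rows by $y_{\mathrm{L}}$ and columns by $y_{\mathrm{R}}$ exhibits $F_{B_k}$ as a block matrix with $2^j$ block rows and $2^{n-j}$ block columns, in which the $(y_{\mathrm{L}}, y_{\mathrm{R}})$ block is exactly the matrix $F_{i,\,y_{\mathrm{L}} y_{\mathrm{R}}}$. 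Subadditivity of rank under a block decomposition then gives
\begin{equation*}
\rank F_{B_k} \le \sum_{y_{\mathrm{L}}, y_{\mathrm{R}}} \rank F_{i,\, y_{\mathrm{L}} y_{\mathrm{R}}} = \sum_{y \in \{0,1\}^{\otimes n}} \rank F_{i,y},
\end{equation*}
the last equality holding because each $y$ is produced by a unique pair $(y_{\mathrm{L}}, y_{\mathrm{R}})$.

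Finally, I would bound this sum exactly as in Lemma \ref{l1}: a term is nonzero only when $y \in \{x_i : f(x) = 1\}$, which by Assumption \ref{a1} contains only $\poly n$ elements, while Assumption \ref{a2} bounds every surviving summand by $O(1)$. Hence $\rank F_{B_k} = \poly n$, and the cited lemma then gives $\max_k l_k = \poly n$, as required. I do not anticipate a serious obstacle; the one conceptual point to get right is that ``cutting'' row $i$ between columns $j$ and $j+1$ forces $y_{\mathrm{L}}$ and $y_{\mathrm{R}}$ to sit on opposite sides of the cut, and recognizing that fixing their concatenation to $y$ recovers precisely the block $F_{i,y}$ so the Assumption \ref{a1}/Assumption \ref{a2} bookkeeping from Lemma \ref{l1} carries over verbatim.
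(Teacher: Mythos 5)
Your proposal is correct and follows essentially the same route as the paper: both reduce the theorem via the cited lemma to bounding $\rank F_{B_k}$, partition $F_{B_k}$ into blocks indexed by the prefix/suffix of the cut row, identify each block with $F_{i,\overline{Y_1Y_2}}$, and then apply rank subadditivity together with Assumptions \ref{a1} and \ref{a2} exactly as in Lemma \ref{l1}. The only difference is cosmetic (your explicit remark that each $y$ arises from a unique pair $(y_{\mathrm{L}},y_{\mathrm{R}})$, which the paper leaves implicit).
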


\begin{proof}
It suffices to prove $\rank F_{B_k}=\poly n$. However, we cannot directly use Corollary \ref{c1}, because $B_k$ may not be rectangular. Suppose the pixel labeled by $k$ is in the $i$th row and $j$th column such that $k=(i-1)n+j$. To simplify the notation, let $Y_1=\overline{y_{(i-1)n+1}y_{(i-1)n+2}\cdots y_k}$ and $Y_2=\overline{y_{k+1}y_{k+2}\cdots y_{in}}$. It is straightforward to see that the matrix $F_{B_k}$ can be partitioned into blocks indexed by $Y_1,Y_2$:
\begin{equation}
F_{B_k}=(F_{B_k,Y_1,Y_2})_{Y_1\in\{0,1\}^{\otimes j},Y_2\in\{0,1\}^{\otimes(n-j)}},
\end{equation}
where the submatrix $F_{B_k,Y_1,Y_2}$ is of dimension $2^{(i-1)n}\times2^{(n-i)n}$ with matrix elements given by
\begin{equation}
F_{B_k,Y_1,Y_2}(1+\overline{y_1y_2\cdots y_{(i-1)n}},1+\overline{y_{in+1}y_{in+2}\cdots y_{n^2}})=f(\overline{y_1y_2\cdots y_{(i-1)n}Y_1Y_2y_{in+1}\cdots y_{n^2}}).
\end{equation}
Hence,
\begin{equation}
\rank F_{B_k}\le\sum_{Y_1\in\{0,1\}^{\otimes j},Y_2\in\{0,1\}^{\otimes(n-j)}}\rank F_{B_k,Y_1,Y_2}=\sum_{\substack{Y_1\in\{0,1\}^{\otimes j},Y_2\in\{0,1\}^{\otimes(n-j)}\\\overline{Y_1Y_2}\rm~valid~row~configuration}}\rank F_{B_k,Y_1,Y_2}.
\end{equation}
The last summation has only a polynomial number of terms (Assumption \ref{a1}), each of which is $O(1)$ (Assumption \ref{a2}).
\end{proof}

\section{Discussions}

The key insight of our approach is the low-rankness (Corollary \ref{c1}) of $F_A$, which is a matrix obtained by reshaping the function $f(x)$ such that the row (column) index corresponds to the configuration of the region $A$ ($\bar A$). This way of thinking is standard practice in quantum physics, and was adopted in a series of recent papers \cite{DLD17, CCX+17, LYCS17, Zha17} to characterize the expressive power of neural networks. In particular, Ref. \cite{Zha17} proved the so-called ``area law''
\begin{equation} \label{al}
\rank F_A=e^{O(|\partial A|)}
\end{equation}
based on assumptions that are conceptually similar to but technically different from those in Sec. \ref{s2}. Ref. \cite{DLD17} showed that any function represented by a restricted Boltzmann machine with short-range interactions obeys the area law. It should be noted that the converse is not true. Let $S$ be the set of all functions obeying the area law. A simple counting argument implies that for any architecture, a random element in $S$ does not have an efficient representation with overwhelming probability \cite{GE16}. In contrast, we have proved that Corollary \ref{c1}, an exponential improvement on the area law, implies an efficient neural network representation.

Finally, we comment on the practical performance of the architectures considered in this paper. Ref. \cite{SS16} designed a training algorithm for the tensor train format by adapting existing optimization techniques \cite{Whi92, Sch11, HRS12}. For the MNIST database of handwritten digits, test error rates of $2\%,0.97\%$ can be achieved with matrix dimensions $\max_kl_k=20,120$, respectively. Preliminary results for the CIFAR-10 dataset were reported in a very recent paper \cite{KNO17}. Ref. \cite{HWF+17} proposed a generative model based on the tensor train format. Ref. \cite{NPOV16} used the tensor train format to compress fully-connected layers in a deep neural network.

Ref. \cite{LRW+17} designed a training algorithm for hierarchical Tucker format by importing optimization techniques from the physics literature \cite{EV09}. For MNIST, a test error rate of $4\%$ can be achieved with $l_k=3$ channels in every layer. Preliminary results for CIFAR-10 were reported in the same paper. It should be noted that the architecture used in \cite{LRW+17} is almost identical to the one described in Sec. \ref{s4}, except that each non-leaf node in the tree has $4$ children.

In these experiments on MNIST, the number of channels in hierarchical Tucker format and the matrix dimension in the tensor train format appear to be smaller than those in Theorems \ref{t1} and \ref{t2}. This is because the digits in MNIST have been size-normalized and centered. Such preprocessing greatly reduces the number of row configurations so that the estimate in Assumption \ref{a1} is far from tight. We expect that the scaling relation in Lemma \ref{l1} would be observed in an experiment where the digits were randomly sized and position-shifted. We leave it to future work.

\section*{Acknowledgments}

We would like to thank Tengyu Ma and Yang Yuan for helpful comments, and especially Pengchuan Zhang for inspiring discussions.

We acknowledge funding provided by the Institute for Quantum Information and Matter, an NSF Physics Frontiers Center (NSF Grant PHY-1125565) with support of the Gordon and Betty Moore Foundation (GBMF-2644). Additional funding support was provided by NSF DMR-1654340.

\bibliographystyle{abbrv}
\bibliography{ml}

\begin{thebibliography}{10}

\bibitem{CCX+17}
J.~Chen, S.~Cheng, H.~Xie, L.~Wang, and T.~Xiang.
\newblock On the equivalence of restricted {B}oltzmann machines and tensor
  network states.
\newblock {a}rXiv:1701.04831, 2017.

\bibitem{CSL+17}
N.~Cohen, O.~Sharir, Y.~Levine, R.~Tamari, D.~Yakira, and A.~Shashua.
\newblock Analysis and design of convolutional networks via hierarchical tensor
  decompositions.
\newblock {a}rXiv:1705.02302, 2017.

\bibitem{CSS16}
N.~Cohen, O.~Sharir, and A.~Shashua.
\newblock On the expressive power of deep learning: A tensor analysis.
\newblock In {\em 29th Annual Conference on Learning Theory}, pages 698--728,
  2016.

\bibitem{CS16}
N.~Cohen and A.~Shashua.
\newblock Convolutional rectifier networks as generalized tensor
  decompositions.
\newblock In {\em 33rd International Conference on Machine Learning}, pages
  955--963, 2016.

\bibitem{DLD17}
D.-L. Deng, X.~Li, and S.~Das~Sarma.
\newblock Quantum entanglement in neural network states.
\newblock {\em Physical Review X}, 7(2):021021, 2017.

\bibitem{EV09}
G.~Evenbly and G.~Vidal.
\newblock Algorithms for entanglement renormalization.
\newblock {\em Physical Review B}, 79(14):144108, 2009.

\bibitem{GE16}
Y.~Ge and J.~Eisert.
\newblock Area laws and efficient descriptions of quantum many-body states.
\newblock {\em New Journal of Physics}, 18(8):083026, 2016.

\bibitem{Gra10}
L.~Grasedyck.
\newblock Hierarchical singular value decomposition of tensors.
\newblock {\em SIAM Journal on Matrix Analysis and Applications},
  31(4):2029--2054, 2010.

\bibitem{HK09}
W.~Hackbusch and S.~K{\"u}hn.
\newblock A new scheme for the tensor representation.
\newblock {\em Journal of Fourier Analysis and Applications}, 15(5):706--722,
  2009.

\bibitem{HWF+17}
Z.-Y. Han, J.~Wang, H.~Fan, L.~Wang, and P.~Zhang.
\newblock Unsupervised generative modeling using matrix product states.
\newblock {a}rXiv:1709.01662, 2017.

\bibitem{HRS12}
S.~Holtz, T.~Rohwedder, and R.~Schneider.
\newblock The alternating linear scheme for tensor optimization in the tensor
  train format.
\newblock {\em SIAM Journal on Scientific Computing}, 34(2):A683--A713, 2012.

\bibitem{KNO17}
V.~Khrulkov, A.~Novikov, and I.~Oseledets.
\newblock Expressive power of recurrent neural networks.
\newblock {a}rXiv:1711.00811, 2017.

\bibitem{LBBH98}
Y.~Lecun, L.~Bottou, Y.~Bengio, and P.~Haffner.
\newblock Gradient-based learning applied to document recognition.
\newblock {\em Proceedings of the IEEE}, 86(11):2278--2324, 1998.

\bibitem{LYCS17}
Y.~Levine, D.~Yakira, N.~Cohen, and A.~Shashua.
\newblock Deep learning and quantum entanglement: Fundamental connections with
  implications to network design.
\newblock {a}rXiv:1704.01552, 2017.

\bibitem{LTR17}
H.~W. Lin, M.~Tegmark, and D.~Rolnick.
\newblock Why does deep and cheap learning work so well?
\newblock {\em Journal of Statistical Physics}, 168(6):1223--1247, 2017.

\bibitem{LRW+17}
D.~Liu, S.-J. Ran, P.~Wittek, C.~Peng, R.~B. Garc{\'i}a, G.~Su, and
  M.~Lewenstein.
\newblock Machine learning by two-dimensional hierarchical tensor networks: A
  quantum information theoretic perspective on deep architectures.
\newblock {a}rXiv:1710.04833, 2017.

\bibitem{NPOV16}
A.~Novikov, D.~Podoprikhin, A.~Osokin, and D.~P. Vetrov.
\newblock Tensorizing neural networks.
\newblock In {\em Advances in Neural Information Processing Systems 28}, pages
  442--450, 2015.

\bibitem{Ose11}
I.~V. Oseledets.
\newblock Tensor-train decomposition.
\newblock {\em SIAM Journal on Scientific Computing}, 33(5):2295--2317, 2011.

\bibitem{OT09}
I.~V. Oseledets and E.~E. Tyrtyshnikov.
\newblock Breaking the curse of dimensionality, or how to use {SVD} in many
  dimensions.
\newblock {\em SIAM Journal on Scientific Computing}, 31(5):3744--3759, 2009.

\bibitem{PVWC07}
D.~Perez-Garcia, F.~Verstraete, M.~M. Wolf, and J.~I. Cirac.
\newblock Matrix product state representations.
\newblock {\em Quantum Information and Computation}, 7(5-6):401--430, 2007.

\bibitem{Sch11}
U.~Schollw{\"o}ck.
\newblock The density-matrix renormalization group in the age of matrix product
  states.
\newblock {\em Annals of Physics}, 326(1):96--192, 2011.

\bibitem{SDV06}
Y.-Y. Shi, L.-M. Duan, and G.~Vidal.
\newblock Classical simulation of quantum many-body systems with a tree tensor
  network.
\newblock {\em Physical Review A}, 74(2):022320, 2006.

\bibitem{SS16}
E.~Stoudenmire and D.~J. Schwab.
\newblock Supervised learning with tensor networks.
\newblock In {\em Advances in Neural Information Processing Systems 29}, pages
  4799--4807, 2016.

\bibitem{Vid03}
G.~Vidal.
\newblock Efficient classical simulation of slightly entangled quantum
  computations.
\newblock {\em Physical Review Letters}, 91(14):147902, 2003.

\bibitem{Vid07}
G.~Vidal.
\newblock Entanglement renormalization.
\newblock {\em Physical Review Letters}, 99(22):220405, 2007.

\bibitem{Whi92}
S.~R. White.
\newblock Density matrix formulation for quantum renormalization groups.
\newblock {\em Physical Review Letters}, 69(19):2863--2866, 1992.

\bibitem{Zha17}
Y.-H. Zhang.
\newblock Entanglement entropy of target functions for image classification and
  convolutional neural network.
\newblock {a}rXiv:1710.05520, 2017.

\end{thebibliography}

\end{document}